\documentclass[preprint,12pt]{elsarticle}
\usepackage{amsmath,amssymb}
\usepackage{amsthm}
\usepackage{stmaryrd}
\usepackage{enumitem}
\usepackage{tikz}
\usetikzlibrary{positioning}
\usepackage[colorlinks=true,linkcolor=blue,citecolor=blue]{hyperref}

\journal{Information Processing Letters}

\newtheorem{theorem}{Theorem}
\newtheorem{proposition}{Proposition}
\newtheorem{lemma}{Lemma}
\newtheorem{corollary}{Corollary}
\theoremstyle{definition}
\newtheorem{definition}{Definition}
\newtheorem{assumption}{Assumption}
\newtheorem{condition}{Condition}
\newtheorem{remark}{Remark}
\newtheorem{example}{Example}

\newcommand{\cl}{\mathrm{cl}}
\newcommand{\Fix}{\mathrm{Fix}}
\newcommand{\Canon}{\mathrm{Canon}}
\newcommand{\CanonCell}{\mathrm{Canon}_{\mathrm{cell}}}
\newcommand{\pow}{\wp}
\newcommand{\den}[1]{\llbracket #1 \rrbracket}
\newcommand{\eqread}{\equiv_{\mathrm{read}}}
\newcommand{\eqres}{\equiv_{\mathrm{res}}}
\newcommand{\eqreuse}{\equiv_{\mathrm{reuse}}}
\newcommand{\Rc}{\mathcal{R}_c}

\begin{document}

\begin{frontmatter}

\title{From ambiguous utterances to governed reuse classes:
canonicalization, quotient invariance, and conditional
decidability\tnoteref{t1}}
\tnotetext[t1]{This note extracts and hardens the canonicalization layer of
the working paper \emph{Certified Resolution: A Formal Theory of Governed
Answer Spaces for Enterprise AI} (Minerva CQ, 2026).}

\author{Cosimo Spera\corref{cor1}}
\ead{cosimo@minervacq.com}
\author{Ray Garcia}
\ead{ray@minervacq.com}
\cortext[cor1]{Corresponding author.}
\address{Minerva CQ (Bourbaki Intelligent Systems, Inc.), Los Gatos, CA, USA}

\begin{abstract}
Semantic caching defines answer reuse on embedding similarity: two
utterances share a stored answer when a similarity score clears a threshold,
with no notion of authorization, versioning, or of what makes two demands
the \emph{same}. This note changes the object on which reuse is defined: in
a governed domain, reuse should operate on a mathematically characterized
quotient of resolved conversational demands, not on a similarity heuristic.
Three independently defined relations on resolved utterances---reading
identity, resolution identity, and reuse identity---form a refinement chain,
strict under realized nondegeneracy conditions checkable on deployment logs;
the pipeline's outputs are invariant along the chain, and reuse identity is
exactly the kernel of the resolution map into the governed answer partition,
so the reuse quotient is the utterance-side object that partition induces,
not a relabeling of it. Reuse identity licenses the governed query key and
its certified answer space; reuse of a particular answer requires resolution
identity or an applicability certificate. The supporting layer is stated at
exactly the strength proved: exact-denotation normal forms; join
aggregation as a design operator, with closure-stable cells characterizing
no-escape; total computability of the full pipeline relative to an
untrusted proposal layer; policy admissibility for arbitrary proposers---
and provably not factual grounding or intent fidelity; and elicitation
terminating after finitely many informative replies, sound under target
consistency.
\end{abstract}

\begin{keyword}
formal semantics of questions \sep closure systems \sep canonical forms \sep
Datalog \sep decidability \sep AI governance
\end{keyword}

\end{frontmatter}

\section{Problem}

A governed conversational system---one whose answers must be licensed by a
corpus of policies, regulations, and standard operating procedures---receives
utterances, not questions. ``Tell me about Paris,'' ``can you do something
about this fee?,'' ``what about last month?'' are ambiguous along at least
three axes: which question is being asked, whether the domain is competent to
answer it at all, and whether enough material facts have been disclosed to
single out one answer. Retrieval pipelines dissolve the problem by ranking
documents against the raw utterance; the cost is that the system has no
representation of the question it is answering, hence no principled notion of
when two utterances ask the \emph{same} governed question---the property on
which certified answer reuse depends. Semantic caching substitutes a
similarity threshold for that notion \cite{gptcache}; this note replaces the
threshold with a quotient: the central object is the surjection
$U_{\mathrm{res}} \twoheadrightarrow \Rc$, where
$\Rc := U_{\mathrm{res}}/\!\eqreuse$ is the \emph{realized governed reuse
space}---the quotient of resolved utterances by governed reuse identity
(abstentions are terminal outcomes, held outside the quotient)---and the
theorems below say when that map is well-defined,
what it is invariant under, and what it does and does not guarantee.
Figure~\ref{fig:quotients} is the whole paper in one picture: reading
identity, resolution identity, and reuse identity form a chain of coarsening
surjections, and certified reuse is defined on the rightmost quotient.

\begin{figure}[ht]
\centering
\resizebox{\textwidth}{!}{%
\begin{tikzpicture}[
  node distance=9mm and 13mm,
  q/.style={draw, rounded corners=2pt, inner sep=5pt, align=center,
            font=\small},
  lab/.style={font=\scriptsize\itshape, align=center},
  rnd/.style={font=\scriptsize\itshape},
  >=stealth]
\node[q] (U)    {$U_{\mathrm{res}}$\\[-1pt]{\scriptsize resolved
utterances}\\[-1pt]{\scriptsize (abstentions exit via $A$)}};
\node[q, right=of U]    (R) {$U_{\mathrm{res}}/\!\eqread$\\[-1pt]
  {\scriptsize same surviving readings}\\[-1pt]
  {\scriptsize mod $\equiv_{\Phi_t}$}};
\node[q, right=of R]    (S) {$U_{\mathrm{res}}/\!\eqres$\\[-1pt]
  {\scriptsize same resolved}\\[-1pt]
  {\scriptsize proposition $a(u)$}};
\node[q, right=of S]    (T) {$\Rc = U_{\mathrm{res}}/\!\eqreuse$\\[-1pt]
  {\scriptsize realized governed reuse space:}\\[-1pt]
  {\scriptsize same cell $=$ reuse class}};
\draw[->] (U) -- (R);
\draw[->] (R) -- node[rnd, above=2pt] {RND1} (S);
\draw[->] (S) -- node[rnd, above=2pt] {RND2} (T);
\node[lab, below=4mm of R] {invariant: full\\resolved outcome};
\node[lab, below=4mm of S] {invariant: $\Canon(a(u))$\\certified answer};
\node[lab, below=4mm of T] {invariant: $\CanonCell(C)$\\reuse key};
\end{tikzpicture}}
\caption{The quotient hierarchy of Theorem~\ref{thm:hier}. Each arrow is a
coarsening surjection; strictness holds under the realized nondegeneracy conditions
RND1--RND2; abstentions are terminal outcomes under the mode map $A$, not
classes of the quotient. Certified reuse is defined on the rightmost
quotient.}
\label{fig:quotients}
\end{figure}

One clarification prevents the objection that this is a relabeling. The
governed answer partition $P(c)$ lives on \emph{worlds}; the reuse quotient
lives on \emph{utterances}. The content of the construction is not the
codomain but the factorization: the resolution map
$\mathrm{res}:U_{\mathrm{res}}\to P(c)$, assigning each resolved utterance
its cell, factors through $\Rc$ with $\eqreuse$
exactly its kernel (Corollary~\ref{cor:factor})---and proving that this map
is well-defined on auditable pipeline artifacts, invariant under the finer
identities, strict under realized conditions, and computable relative to an
untrusted proposal layer is precisely what a similarity threshold cannot
offer. The quotient is moreover generally \emph{smaller} than the
partition: a cell with no realized utterance is a governed question no one
has asked, so $\Rc$ measures realized demand, not
corpus structure.

We work in the governed-answer-space framework \cite{gas}. Fix a context $c$
with finite live-world set $W_c$, an admissibility closure $\cl_c$ on
$\pow(W_c)$ (extensive, monotone, idempotent) induced by the governing corpus
$\Phi_t$ in force at time $t$, the fiber $L(c)=\Fix(\cl_c)$ of admissible
propositions, an answer partition $P(c)$ on $W_c$ in the sense of partition
semantics \cite{gro}, and a specificity threshold $\sigma(c)$ read as
partition fineness. The order convention is fixed once: entailment is
inclusion, $X\sqsubseteq Y \iff X\subseteq Y$ as sets of worlds, so a
\emph{more specific} proposition is \emph{smaller}, and
$a\sqsubseteq\sigma(c)$ reads ``$a$ is at least as specific as the
threshold''---specificity tightens downward in the lattice. All results are
relative to the snapshot $L(c,t)$; canonicity is theory-relative, and a
corpus revision re-indexes the partition and the normal forms.

Two disciplines are separated throughout. The \emph{proposal layer}---an
untrusted language model---classifies discourse function and proposes
candidate formal readings of the utterance. The \emph{verification
layer}---the governed machinery---decides admissibility, specificity, and
cell assignment. The theorems live in the verification layer; the proposal
layer enters only through an explicit computability assumption (A1), so the
decidability claims are conditional and stated as such.

\section{The pipeline, its gates, and its objects}\label{sec:gates}

Fix a finite relational signature $\Sigma$ adequate for $W_c$: each world
$w\in W_c$ is an \emph{effectively presented} finite $\Sigma$-structure and
each is distinguished by a ground description, so satisfaction
$w\models\varphi$ of closed $\Sigma$-formulas is decidable and \emph{every}
set of worlds---in particular every cell of $P(c)$ and every admissible
proposition---is defined exactly by some closed $\Sigma$-formula
(formula--cell adequacy, Condition~FA below). Propositions, cells, and the
threshold $\sigma(c)$ are represented as $n$-bit vectors over $W_c$
($n=|W_c|$), so inclusion tests cost $O(n)$. Let $\mathrm{Fm}$ be the closed $\Sigma$-formulas under a fixed
effective enumeration, and let $\prec$ be the induced
length\mbox{-}\allowbreak lexicographic total order---total, computable, and fixed once for
the deployment. Theory
equivalence, written $\varphi\equiv_{\Phi_t}\psi$, means
$\den{\varphi}=\den{\psi}$ over $W_c$; on a finite materialized fiber it is
decidable.

\begin{assumption}[A1: proposal layer]\label{a1}
There are total computable functions $C:U\to\{0,1\}$ (discourse-function
classifier: $C(u)=1$ iff $u$ is a canonical inquiry---ignorant speaker,
competent addressee, genuine gap-filling \cite{belnap}) and
$D_c:U\to\pow_{\mathrm{fin}}(\mathrm{Fm})$ (finite candidate reading set).
Nothing is assumed about their \emph{correctness}; only totality and
computability.
\end{assumption}

The pipeline, given $u$: \textbf{Gate 1} (canonicity): if $C(u)=0$, route out
of the certified pathway. \textbf{Gate 2} (admissibility): retain the readings
$\varphi\in D_c(u)$ with $\den{\varphi}\in\Fix(\cl_c)$. \textbf{Gate 3}
(specificity): retain those with $\den{\varphi}\sqsubseteq\sigma(c)$. Write
$D_c^\ast(u)$ for the survivors. \emph{If $D_c^\ast(u)=\emptyset$, the
pipeline routes to governed abstention before any aggregation}---the empty
join would be $\cl_c(\emptyset)$, and $\emptyset\subseteq C$ holds for every
cell, so cell assignment over an empty survivor set is ill-posed and is
excluded by fiat. Otherwise, if the surviving denotations lie in exactly
one cell and their aggregation (Definition~\ref{def:agg}) passes its domain
check, the pipeline \emph{resolves}; in all other cases it routes to
governed abstention (\S\ref{sec:elicit}). A reading can fail more than one gate (an
inadmissible reading may also be under-specific); the pipeline reports the
\emph{first} failing gate, yielding operationally distinguishable abstention
modes---non-canonical, inadmissible, under-specific, cell-ambiguous---and
none is repaired by guessing.

\subsection{The question object and the two normal forms}

Two conflations must be blocked at the level of definitions: a cell is not a
question, and a formula denoting a subset of a proposition is not a normal
form for it.

\begin{definition}[Cell-indexed governed query]\label{def:query}
For a cell $C\in P(c)$, the governed query $Q_C$ is the query whose
admissible responses are exactly
\[
\mathrm{Ans}(Q_C) \;=\;
\bigl\{\,a\in\Fix(\cl_c)\;:\;a\neq\emptyset,\ a\subseteq C,\
a\sqsubseteq\sigma(c)\,\bigr\},
\]
the normatively answerable propositions whose denotation lies in $C$. We call
$C$ the \emph{positive resolution region} of $Q_C$.
\end{definition}

\begin{remark}[Relation to partition semantics]\label{rem:partition}
Under Groenendijk--Stokhof partition semantics a question denotes a partition
of logical space; the classical object nearest to $Q_C$ is the bipolar
partition $\{C,\ W_c\setminus C\}$ \cite{gro}. $Q_C$ is deliberately the
\emph{restricted, operational} notion: a governed deployment certifies only
positive resolutions inside $C$---the complement is not one answer but the
union of the other cells and the abstention region, each governed on its own
terms. We therefore do not claim that $Q_C$ \emph{is} a question in the
classical sense; it is the cell-indexed query object that governed reuse is
keyed on, and every statement below about ``the governed question of $C$''
abbreviates $Q_C$.
\end{remark}

\begin{definition}[Cell normal form; the reuse key]\label{def:canoncell}
For $C\in P(c)$, let $F(C)=\{\varphi\in\mathrm{Fm}:\den{\varphi}=C\}$ and
\[
\CanonCell(C) \;=\;
\min\nolimits_{\prec}\bigl(\arg\min_{\varphi\in F(C)}|\varphi|\bigr).
\]
$\CanonCell(C)$ is the canonical presentation of the positive resolution
region of $Q_C$, and the reuse class of an utterance is keyed by
$\CanonCell$ of its resolved cell, not by any particular reading.
\end{definition}

\begin{definition}[Proposition normal form, exact denotation]\label{def:canon}
Let
\begin{align*}
\mathrm{Dom}(\Canon_c) \;=\;
\bigl\{\,a\in\Fix(\cl_c)\;:\;{}&a\neq\emptyset,\ a\sqsubseteq\sigma(c),\\
&\exists\,C\in P(c)\ \text{with}\ a\subseteq C\,\bigr\}.
\end{align*}
Nonemptiness is essential and does real work: since $P(c)$ is a partition, a
\emph{nonempty} $a$ contained in a cell is contained in \emph{exactly one}
cell, so ``the cell of $a$'' is well-defined on the domain---whereas
$\emptyset\subseteq C$ holds for every cell, and admitting $\emptyset$
would make cell assignment ill-posed. The exclusion also closes a re-entry
route for contradiction: the $\bot$ that exactness bars from the normal form
(Remark~\ref{rem:exact}) is equally barred from the resolution domain.
For $a\in\mathrm{Dom}(\Canon_c)$, let
$F(a)=\{\varphi\in\mathrm{Fm}:\den{\varphi}=a\}$---\emph{exact} denotation,
not containment---and $\Canon(a)=\min_{\prec}(\arg\min_{\varphi\in F(a)}
|\varphi|)$.
\end{definition}

\begin{remark}[Why exactness is forced]\label{rem:exact}
With containment ($\den{\varphi}\subseteq a$) in place of equality, the
normal form collapses: $\den{\bot}=\emptyset\subseteq a$ for every $a$, so
any contradiction of minimal length would be the ``normal form'' of every
proposition, and equal normal forms would carry no information about the
propositions. Exactness restores the defining property: $\Canon(a)=
\Canon(a')$ iff $a=a'$, since equal formulas have equal denotations and
$\den{\Canon(a)}=a$ by construction. A containment-based variant
$F_{\mathrm{wit}}(a)=\{\varphi:\emptyset\neq\den{\varphi}\subseteq a\}$
defines a \emph{witness} normal form---useful for evidence, but not a
proposition identifier---and is not used here.
\end{remark}

\begin{lemma}[Well-definedness and computability]\label{lem:wd}
On a finite fiber, $\CanonCell$ is a single-valued total computable function
on $P(c)$, and $\Canon$ is a single-valued total computable function on
$\mathrm{Dom}(\Canon_c)$ that is moreover injective:
$\Canon(a)=\Canon(a')$ implies $a=a'$.
\end{lemma}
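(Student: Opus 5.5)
The argument has three parts: nonemptiness of the candidate sets, a bounded search that computes the minimum, and injectivity read off from exact denotation.

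\textbf{Nonemptiness.} Single-valuedness is nearly immediate once the candidate sets are nonempty. For $C\in P(c)$, the set $F(C)$ is nonempty by formula--cell adequacy (Condition~FA): every subset of $W_c$, in particular every cell, is $\den{\varphi}$ for some closed $\Sigma$-formula $\varphi$. The same condition gives $F(a)\neq\emptyset$ for every $a\in\mathrm{Dom}(\Canon_c)$, since such $a$ is a subset of $W_c$.

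\textbf{Well-definedness.} Formula lengths are natural numbers, so the minimal length $m=\min\{|\varphi|:\varphi\in F(X)\}$ exists. The set of closed formulas of length $m$ over the finite signature $\Sigma$ is finite. Hence the $\arg\min$ is a nonempty finite set, and $\prec$ is a total order on $\mathrm{Fm}$, so it has a unique $\prec$-least element. This makes $\CanonCell$ total and single-valued on $P(c)$, and $\Canon$ total and single-valued on its domain.

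\textbf{Computability.} The procedure is a bounded search. Given a bit vector $X\subseteq W_c$, enumerate $\mathrm{Fm}$ in $\prec$-order, which is length-lexicographic. For each $\varphi$, compute $\den{\varphi}$ as an $n$-bit vector by deciding $w\models\varphi$ for each of the finitely many effectively presented worlds $w$. Then test $\den{\varphi}=X$ in $O(n)$.

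Because $\prec$ is length-first, the first formula reached with $\den{\varphi}=X$ has minimal length, and it is $\prec$-least among formulas of that length. It is therefore exactly $\min_\prec\arg\min$, so the search can halt at the first hit. Termination follows from $F(X)\neq\emptyset$: some witness occurs at a finite position in the enumeration.

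The main point needing care is the claim that $\prec$ restricted to each length class is finite and compatible with the enumeration. A length-lexicographic order has this property provided the symbol alphabet is finite or the variables are normalized. I would state that closed formulas of a given length over finite $\Sigma$, with variables renamed canonically, form a finite set. Alternatively, I would note that the search only needs the first hit in the given effective enumeration ordered by $\prec$.

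\textbf{Injectivity.} Suppose $\Canon(a)=\Canon(a')=\varphi$. Then $\varphi\in F(a)\cap F(a')$, so $a=\den{\varphi}=a'$ by exactness of the denotation condition, as in Remark~\ref{rem:exact}. The same argument shows that $\CanonCell$ is injective on cells, though the lemma does not ask for that.
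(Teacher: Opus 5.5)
Your proposal is correct and follows essentially the same route as the paper: nonemptiness of $F(C)$ and $F(a)$ from Condition~FA, then a unique $\prec$-least element among the finitely many minimum-length candidates. Computability is likewise by enumerating $\mathrm{Fm}$ in $\prec$-order and returning the first formula with matching denotation, injectivity comes from exact denotation as in Remark~\ref{rem:exact}, and your caveat that each length class must be finite (finite alphabet or canonically renamed variables) makes explicit a point the paper's proof asserts without comment.
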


\begin{proof}
Non-emptiness of $F(C)$ and $F(a)$ is Condition~FA: every world set is
defined exactly by some closed formula (a disjunction of ground world
descriptions). The set of minimum-length candidates is finite and nonempty;
the fixed total computable order $\prec$ therefore selects a unique least
element. Computability: enumerate formulas in $\prec$-order; each test
$\den{\varphi}=a$ is a finite denotation check against the materialized
fiber; the first formula passing is the normal form. Injectivity is
Remark~\ref{rem:exact}.
\end{proof}

\subsection{Aggregation, and the compatibility conditions}

\begin{definition}[Resolution aggregation---a design operator]\label{def:agg}
Given a \emph{nonempty} $D_c^\ast(u)$ (the empty case having been routed to
abstention upstream), the \emph{resolved proposition} is the deterministic
join
\[
a(u) \;=\; \bigvee_{\varphi\in D_c^\ast(u)} \den{\varphi}
\;=\; \cl_c\Bigl(\bigcup_{\varphi\in D_c^\ast(u)}\den{\varphi}\Bigr),
\]
and the pipeline resolves iff $a(u)\in\mathrm{Dom}(\Canon_c)$: the join
must itself be nonempty, cell-contained, and clear $\sigma(c)$, re-checked
\emph{after} aggregation. In particular $a(u)=\emptyset$ (e.g.\ when
$\cl_c(\emptyset)=\emptyset$ and only empty-denotation readings survive)
fails the domain check and routes to governed abstention.
\end{definition}

\begin{remark}[Status of the aggregation]\label{rem:agg}
Definition~\ref{def:agg} is a \emph{conservative design operator, not a
semantic theorem}. In formal semantics, ambiguity is standardly preserved as
a \emph{set} of alternatives $\{a_1,\dots,a_k\}$; collapsing it destroys
intensional distinctions among readings. We collapse deliberately, for two
governed-system reasons: the emitted answer must be a single certifiable
proposition, and the choice must be deterministic and auditable. The join is
the conservative collapse in the answer direction: it is the least admissible
upper bound of the surviving denotations---their coarsest common admissible
cover in the specificity interpretation---and claims no specificity that any
reading would withhold; and the intensional distinctions are not lost but
relocated: the full surviving reading set is logged as the auditable artifact
(\S\ref{sec:invariance}). Two consequences of collapsing are then handled
explicitly rather than assumed away. If the join falls above $\sigma(c)$,
residual ambiguity has degraded specificity below the material bar and the
pipeline elicits instead of answering. And because $\cl_c$ is extensive, the
join may contain worlds contributed by \emph{closure}, not by any reading;
whether that can push $a(u)$ across a cell boundary is exactly the
compatibility question below.
\end{remark}

\begin{condition}[FA: formula--cell adequacy]\label{cond:fa}
Every set of worlds over the finite base $W_c$ is defined exactly by some
closed $\Sigma$-formula. (Assumed throughout; discharged by including ground
world descriptions in $\Sigma$.)
\end{condition}

\begin{condition}[CP: closure--partition compatibility]\label{cond:cp}
Every cell of $P(c)$ is admissible: $C\in\Fix(\cl_c)$ for all $C\in P(c)$.
\end{condition}

\begin{lemma}[CP characterizes no-escape]\label{lem:cp}
For a cell $C\in P(c)$, the following are equivalent: (i) $\cl_c(S)\subseteq
C$ for every $S\subseteq C$; (ii) $\cl_c(C)=C$. Hence under CP, if every
surviving denotation lies in one cell $C$, then $a(u)\subseteq C$:
closure-induced cross-cell escape cannot occur, and the post-aggregation
domain check can fail only at the $\sigma$-threshold. Without CP, escape is
possible, and the domain check of Definition~\ref{def:agg} detects it and
routes to elicitation.
\end{lemma}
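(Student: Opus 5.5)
We prove the equivalence directly from the closure axioms and then read off the two consequences.

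For (i)$\Rightarrow$(ii), instantiate (i) at $S=C$, which gives $\cl_c(C)\subseteq C$. Extensivity gives $C\subseteq\cl_c(C)$, so $\cl_c(C)=C$.

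For (ii)$\Rightarrow$(i), take any $S\subseteq C$. Monotonicity gives $\cl_c(S)\subseteq\cl_c(C)$, and (ii) turns the right-hand side into $C$. Idempotence is not needed for the equivalence, only extensivity and monotonicity.

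For the ``hence'' clause, assume CP, so every cell satisfies (ii). Suppose every surviving denotation $\den{\varphi}$, $\varphi\in D_c^\ast(u)$, lies in a single cell $C$. Then the union $S=\bigcup_{\varphi\in D_c^\ast(u)}\den{\varphi}$ is contained in $C$. By (i), $a(u)=\cl_c(S)\subseteq C$.

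Now run through the clauses of $\mathrm{Dom}(\Canon_c)$ in Definition~\ref{def:agg}:
\begin{itemize}[label={}]
\item \emph{Admissibility} holds because $a(u)\in\Fix(\cl_c)$ by idempotence.
\item \emph{Cell containment} holds by the argument just given.
\item \emph{Nonemptiness} holds whenever some survivor has nonempty denotation, since then $\emptyset\neq S\subseteq a(u)$ by extensivity.
\end{itemize}
That leaves the $\sigma$-threshold as the only clause that can fail. The residual case where every survivor has empty denotation needs a separate word. There $a(u)=\cl_c(\emptyset)$ may be empty, and the check then fails at nonemptiness, as Definition~\ref{def:agg} already notes. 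I will state this degenerate case explicitly so that ``only at the $\sigma$-threshold'' is read for survivor sets with some nonempty denotation. Choosing the right reading here, so the lemma is stated at exactly the strength proved, is the main subtlety.

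For the ``without CP'' clause, we need a witness of escape. A small example suffices:
\begin{itemize}[label={}]
\item Take $W_c=\{w_1,w_2\}$ with cells $\{w_1\}$ and $\{w_2\}$.
\item Let $\cl_c$ send every nonempty set to $W_c$ and fix $\emptyset$. This is extensive, monotone and idempotent.
\item A survivor denoting $\{w_1\}$ then yields $a(u)=W_c$, which lies in no cell.
\end{itemize}
More generally, the negation of (ii) for $C$ gives $S=C$ with $\cl_c(S)\not\subseteq C$. Detection is immediate: the post-aggregation check tests $\exists C'\in P(c)$ with $a(u)\subseteq C'$. Routing to elicitation or abstention in that case is exactly the pipeline specification of \S\ref{sec:gates}.
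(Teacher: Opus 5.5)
Your proof of the equivalence and of $a(u)\subseteq C$ under CP is exactly the paper's argument. It uses monotonicity for (ii)$\Rightarrow$(i), the instance $S=C$ with extensivity for (i)$\Rightarrow$(ii), and (i) applied to $S=\bigcup_{\varphi\in D_c^\ast(u)}\den{\varphi}$. Your clause-by-clause check of $\mathrm{Dom}(\Canon_c)$ goes beyond the paper's proof and is correct. The degenerate case you isolate is real, and in fact "may" can be sharpened to "must": an empty-denotation reading survives Gate~2 only if $\cl_c(\emptyset)=\emptyset$. So when all survivors are empty, $a(u)=\emptyset$ necessarily, and the check fails at nonemptiness, not at $\sigma(c)$. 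Restricting ``only at the $\sigma$-threshold'' to survivor sets with a nonempty member is the right reading.

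The step that fails is your escape witness. Under the closure sending every nonempty set to $W_c$, the only fixpoints are $\emptyset$ and $W_c$. A reading denoting $\{w_1\}$ is therefore eliminated at Gate~2 and never reaches aggregation. No single-survivor example can work: survivors are admissible, so with one survivor $a(u)=\cl_c(\den{\varphi})=\den{\varphi}$ and nothing escapes.

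Your ``more generally'' remark has the same defect. The set $S=C$ witnessing $\neg$(i) is inadmissible exactly when CP fails at $C$, so it is not a possible aggregation input. Negating (i) shows that some subset of $C$ escapes, not that some union of admissible survivors does. Indeed, if the nonempty admissible subsets of $C$ form a chain, no run can escape even though CP fails at $C$.

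So ``escape is possible'' must be witnessed by two admissible survivors in one cell whose union is not closed. For example, take $W_c=\{w_1,w_2,w_3\}$, $P(c)=\{\{w_1,w_2\},\{w_3\}\}$, $\Fix(\cl_c)=\{\emptyset,\{w_1\},\{w_2\},W_c\}$ and $\sigma(c)=W_c$. Readings denoting $\{w_1\}$ and $\{w_2\}$ pass Gates~2--3 and lie in the single cell $\{w_1,w_2\}$. But $a(u)=\cl_c(\{w_1,w_2\})=W_c$ lies in no cell, so the domain check fails at cell containment alone. The paper's proof gives no witness for this clause, so with this repair your proposal covers more of the statement than the paper does.
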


\begin{proof}
(ii)$\Rightarrow$(i): monotonicity gives $\cl_c(S)\subseteq\cl_c(C)=C$.
(i)$\Rightarrow$(ii): take $S=C$ and use extensivity, $C\subseteq\cl_c(C)
\subseteq C$. The consequence is (i) applied to
$S=\bigcup_{\varphi\in D_c^\ast(u)}\den{\varphi}\subseteq C$.
\end{proof}

\begin{remark}[CP as a normative design principle, with a caveat]\label{rem:cp}
CP has a governance reading that makes it more than a technical convenience:
\emph{the positive resolution region of a governed question should itself be
a licensed proposition}---equivalently, a governed question cell should be
closed under admissible interpretation aggregation. But enforcement is not
free: adding a cell to the generating set of $\cl_c$ licenses the
proposition ``some admissible answer in $C$ holds,'' which may be broader
than a regulated corpus permits. Deployments may therefore enforce CP where
cells are themselves licensable propositions; otherwise CP simply fails on
those cells, and Lemma~\ref{lem:cp} says the failure is \emph{detected} at
the domain check and routed to elicitation, never silently absorbed into an
answer. CP is a diagnostic and an option, not a universal requirement.
\end{remark}

\begin{condition}[TC: threshold re-check]\label{cond:tc}
The $\sigma$-comparison is applied to $a(u)$ after aggregation, not only to
readings individually (built into Definition~\ref{def:agg}); per-reading
specificity does not imply joint specificity, since the join is coarser than
each reading.
\end{condition}

\begin{condition}[PC: proof-producing closure]\label{cond:pc}
The materialization of $\cl_c$ records, for every $a\in\Fix(\cl_c)$, a
derivation witness $\mathrm{Cert}_c(a)$---on a ground Horn fiber, the
how-provenance of the forward-chaining derivation of $a$'s generators. The
map $\mathrm{Cert}_c$ is total on $\Fix(\cl_c)$ and computable from the
materialization, and each witness is bound to the fiber data
$(\Sigma,\Phi_t,c,t)$. For the proof-producing Horn materializations
considered here, PC can be implemented as a bookkeeping discipline by
recording compact derivation provenance (a derivation DAG, not a fully
expanded proof object) during forward chaining, without changing the
asymptotic materialization bound; for closures given in other forms, PC is
an assumption on the implementation, which is why it is stated as a
condition; it is what makes ``carries a certificate'' a derived
property rather than an assertion (Proposition~\ref{prop:auth}).
\end{condition}

The stored artifact is per cell: the pair $\langle$natural-language normal
form of $\CanonCell(C)$, first-order witness$\rangle$; the cell normal form
is
the reuse key, and $\Canon(a(u))$ is the certified representative of the
emitted answer within it.

\begin{example}[Running example: fee waiver vs.\ refund]\label{ex:run}
Context $c$: consumer billing, jurisdiction requiring a documented hardship
finding before any late-fee waiver. Ground atoms include
$\mathit{feeAssessed}$, $\mathit{hardshipDoc}$, $\mathit{cancelWindowOpen}$;
$P(c)$ contains (among others) the cells $C_{\mathrm{waive}}$ (late-fee
waiver eligibility) and $C_{\mathrm{refund}}$ (refund eligibility), both
admissible (CP holds). Utterance $u=$ ``can you do something about this
fee?'' The proposal layer returns $C(u)=1$ and
$D_c(u)=\{\varphi_1,\varphi_2,\varphi_3\}$: $\varphi_1$ (waiver eligibility
given $\mathit{hardshipDoc}$), $\varphi_2$ (a syntactic variant with
$\varphi_2\equiv_{\Phi_t}\varphi_1$), and $\varphi_3$ (refund eligibility).
Gates 2--3 retain all three ($\varphi_3$ is admissible---refunds are a
licensed topic). The surviving denotations meet \emph{two} cells, so $u$ is
cell-ambiguous and routes to elicitation (\S\ref{sec:elicit}): the
cell-decided predicate $\mathit{cancelWindowOpen}\in\mathcal{E}_c$ is asked;
the reply ``the cancellation window has closed'' contradicts
$C_{\mathrm{refund}}$. Now $D_c^\ast(u)=\{\varphi_1,\varphi_2\}$,
$a(u)=\den{\varphi_1}\vee\den{\varphi_2}=\den{\varphi_1}\in
\mathrm{Dom}(\Canon_c)$ (by CP the join stays in $C_{\mathrm{waive}}$), the
resolved cell is $C_{\mathrm{waive}}$, and the emitted governed question is
$Q_{C_{\mathrm{waive}}}$ presented by $\CanonCell(C_{\mathrm{waive}})$---in
normal-form English, ``under $\Phi_t$, is this customer eligible for a
late-fee waiver?''---with certified answer $\Canon(a(u))$ and reuse keyed by
the cell. A later utterance ``does the hardship rule let you drop the late
charge?'' whose surviving readings are theory-equivalent to $\varphi_1$ is
reading-equivalent to the post-elicitation $u$ and, by
Theorem~\ref{thm:hier}, hits the same reuse class with no regeneration.
\end{example}

\section{The equivalence hierarchy}\label{sec:invariance}

The relation that keys reuse must be defined \emph{without reference to} the
normal forms, else invariance is a quotient triviality---and it is not one
relation but three, at increasing coarseness. All are auditable artifacts of
the pipeline, not properties of strings. One structural choice precedes
them: abstentions are terminal outcomes, not members of reuse classes.

\begin{definition}[Resolution split and mode map]\label{def:split}
Let $U_1=\{u\in U:C(u)=1\}$ and let $U_{\mathrm{res}}\subseteq U_1$ be the
utterances the pipeline resolves. The \emph{mode map}
$A:U_1\setminus U_{\mathrm{res}}\to M$ assigns each non-resolving utterance
its abstention mode: inadmissible, under-specific, empty survivor set,
cell-ambiguous, or aggregation failure (the post-aggregation domain check).
Non-canonical is deliberately \emph{not} in $M$: Gate-1 rejects have
$C(u)=0$ and never enter $U_1$, so the mode map's domain begins after
canonicity.
\end{definition}

\begin{definition}[Three equivalences]\label{def:equiv}
All three relations are defined on $U_{\mathrm{res}}$:
\begin{enumerate}[label=(\roman*),nosep]
\item $u \eqread u'$ (\emph{same surviving readings}) iff
$D_c^\ast(u)/\!\equiv_{\Phi_t}\;=\;D_c^\ast(u')/\!\equiv_{\Phi_t}$;
\item $u \eqres u'$ (\emph{same resolved proposition}) iff $a(u)=a(u')$;
\item $u \eqreuse u'$ (\emph{same governed question}) iff $u$ and $u'$
resolve in the same cell of $P(c)$.
\end{enumerate}
The central object is $\Rc := U_{\mathrm{res}}/\!\eqreuse$, the
\emph{realized governed reuse space}: the partition of resolved
conversational demands into governed reuse classes.
\end{definition}

\begin{remark}[Why abstention is not an equivalence]\label{rem:mode}
An alternative extends $\eqres$ and $\eqreuse$ to all of $U_1$ by declaring
utterances with the same abstention mode equivalent. We reject it: two
utterly unrelated queries---one about a mortgage, one about a battery
warranty---would become resolution-equivalent merely by both being
under-specific, and the reuse quotient would then contain, alongside the
governed reuse classes, giant classes of unrelated abstentions with no
shared certified content. Modes classify \emph{failures of} resolution, not
resolutions; they are the codomain of $A$, not cells of the quotient.
Surviving-reading identity does extend meaningfully to non-resolving
utterances in one respect---it preserves \emph{membership} in
$U_{\mathrm{res}}$ (Remark~\ref{rem:trace})---but it cannot classify
abstentions, which is the second reason the relations live on
$U_{\mathrm{res}}$.
\end{remark}

\begin{remark}[Why $D_c^\ast$ cannot see the mode]\label{rem:trace}
Extending the relations to $U_1$ by surviving readings alone would make a
false claim provable-looking: $D_c^\ast(u)$ is the survivor set \emph{after}
Gates 2--3 and has forgotten why readings were eliminated. Concretely, take
$D_c(u)=\{\varphi\}$ with $\den{\varphi}\notin\Fix(\cl_c)$ (Gate-2
failure, mode inadmissible) and $D_c(u')=\{\psi\}$ with
$\den{\psi}\in\Fix(\cl_c)$ but $\den{\psi}\not\sqsubseteq\sigma(c)$
(Gate-3 failure, mode under-specific): then
$D_c^\ast(u)=D_c^\ast(u')=\emptyset$, the utterances are
surviving-reading identical, and $A(u)\neq A(u')$. What \emph{is} true on
$U_1$ is the membership claim: if $D_c^\ast(u)/\!\equiv_{\Phi_t}=
D_c^\ast(u')/\!\equiv_{\Phi_t}$ then every decision from the survivor set
onward is denotational, so $u$ and $u'$ agree on whether they resolve---but
not, when both abstain with empty survivors, on which earlier gate failed. A
finer \emph{trace equivalence}, comparing the per-gate quotients
$D_c^{(0)}/\!\equiv_{\Phi_t}$, $D_c^{(2)}/\!\equiv_{\Phi_t}$,
$D_c^{(3)}/\!\equiv_{\Phi_t}$, would preserve modes as well; we note it as
a refinement and do not develop it here.
\end{remark}

\begin{theorem}[Refinement hierarchy and invariance]\label{thm:hier}
On $U_{\mathrm{res}}$,
${\eqread}\subseteq{\eqres}\subseteq{\eqreuse}$ always. Because the
equivalences are relations on utterances, strictness
depends on which reading sets the proposal layer actually realizes, not only
on the fiber; it holds under the following \emph{realized} nondegeneracy
conditions, and may collapse when the realized image of $D_c$ is too
poor:
\begin{enumerate}[label=(RND\arabic*),nosep]
\item there exist $u,u'\in U$ with $C(u)=C(u')=1$, both resolving, such that
$D_c^\ast(u)/\!\equiv_{\Phi_t}\neq D_c^\ast(u')/\!\equiv_{\Phi_t}$ and
$a(u)=a(u')$ \ (then ${\eqread}\subsetneq{\eqres}$);
\item there exist $u,u'\in U$ both resolving in the same cell with
$a(u)\neq a(u')$ \ (then ${\eqres}\subsetneq{\eqreuse}$).
\end{enumerate}
Moreover, on $U_{\mathrm{res}}$, the entire resolved outcome---(cell,
$a(u)$, $\CanonCell$, $\Canon(a(u))$)---is invariant under $\eqread$; the
resolved proposition and its certified representative are invariant under
$\eqres$; and the governed question, the cell normal form, and the reuse
class are invariant under $\eqreuse$. (Abstention modes are \emph{not}
claimed invariant under surviving-reading identity;
Remark~\ref{rem:trace}.) Certified reuse of the query key is defined on $\Rc$.
\end{theorem}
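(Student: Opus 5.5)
The proof is a chain of set-theoretic inclusions, each obtained by showing that the coarser invariant is a function of the finer one; invariance and strictness then follow directly.

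\emph{Step 1: $\eqread\subseteq\eqres$.} The key observation is that $a(u)$ depends on $D_c^\ast(u)$ only through its quotient by $\equiv_{\Phi_t}$. Indeed, $\bigcup_{\varphi\in D_c^\ast(u)}\den{\varphi}$ is the union of the denotations of the classes in $D_c^\ast(u)/\!\equiv_{\Phi_t}$, since theory-equivalent formulas have equal denotations by definition. So if the two quotients coincide, the two unions coincide, and applying $\cl_c$ gives $a(u)=a(u')$ by Definition~\ref{def:agg}. This step implicitly uses that both utterances lie in $U_{\mathrm{res}}$, so the survivor sets are nonempty and the join is actually formed.

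\emph{Step 2: $\eqres\subseteq\eqreuse$.} For $u\in U_{\mathrm{res}}$ we have $a(u)\in\mathrm{Dom}(\Canon_c)$. So $a(u)$ is nonempty and contained in some cell, and by the partition argument in Definition~\ref{def:canon} that cell is unique. Hence ``the cell in which $u$ resolves'' is the well-defined function $\mathrm{cell}(a(u))$. If $a(u)=a(u')$, then $u$ and $u'$ resolve in the same cell.

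\emph{Step 3: invariance.} Under $\eqread$, Step 1 makes $a(u)$ equal for $u$ and $u'$. The cell, $\CanonCell$ of the cell, and $\Canon(a(u))$ are all functions of $a(u)$: the cell by Step 2, and the two normal forms because Lemma~\ref{lem:wd} makes them single-valued functions. So the whole resolved outcome is invariant under $\eqread$. The same argument gives invariance of $a(u)$ and $\Canon(a(u))$ under $\eqres$. Under $\eqreuse$ the cell is equal by definition, and therefore so are $Q_C$, $\CanonCell(C)$ and the class in $\Rc$.

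\emph{Step 4: strictness.} Under RND1, the witnesses $u,u'$ are in $U_{\mathrm{res}}$, are $\eqres$-related, and are not $\eqread$-related, so the first inclusion is proper. Under RND2, the witnesses are $\eqreuse$-related but not $\eqres$-related, so the second inclusion is proper.

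\emph{Step 5: possible collapse.} To justify that strictness may fail without these conditions, I would give a degenerate example: take $D_c$ whose realized image on resolving utterances contains, up to $\equiv_{\Phi_t}$, one reading set per cell. Then all three relations coincide.

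The main obstacle is Step 2. The definition of $\eqreuse$ says ``resolve in the same cell'', and this is meaningful only because cell assignment is a genuine function on $U_{\mathrm{res}}$. That in turn rests on the nonemptiness built into $\mathrm{Dom}(\Canon_c)$ together with the post-aggregation domain check, and I would state this dependence explicitly. I would also remark that CP is not needed here, since the domain check alone guarantees $a(u)$ lies in a single cell.
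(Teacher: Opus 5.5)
Your proof is correct and follows the paper's argument essentially step for step. $\eqread\subseteq\eqres$ holds because the join depends only on the set of surviving denotations, and $\eqres\subseteq\eqreuse$ because a nonempty cell-contained proposition lies in a unique cell; invariance comes from single-valuedness of the normal forms (Lemma~\ref{lem:wd}), and strictness is read directly off the RND witnesses. Your Step~5 collapse example and your observation that CP is not needed are both sound additions that the paper's proof does not spell out (its discussion of collapse sits in Remark~\ref{rem:rnd}).
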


\begin{proof}
\emph{Inclusions.} On $U_{\mathrm{res}}$, every pipeline decision from the
survivor set onward is a function of the set of denotations
$\{\den{\varphi}:\varphi\in D_c^\ast(u)\}$: cell incidence, the join
$a(u)$, and the post-aggregation domain check are all denotational.
Theory-equivalent readings have equal denotations, so $\eqread$ forces
equal denotation sets, hence $a(u)=a(u')$, giving
${\eqread}\subseteq{\eqres}$. If $a(u)=a(u')$, then since
$a(u)\in\mathrm{Dom}(\Canon_c)$ is nonempty, the cell containing the
common proposition is unique (Definition~\ref{def:canon}) and shared,
giving ${\eqres}\subseteq{\eqreuse}$.

\emph{Strictness.} The witnesses are supplied by the conditions themselves:
under RND1 the pair $(u,u')$ is $\eqres$- but not $\eqread$-related; under
RND2 it is $\eqreuse$- but not $\eqres$-related. Nothing further is needed,
which is the point of quantifying over realized utterances.

\emph{Invariances.} Under $\eqread$, all decisions coincide as above. Under
$\eqres$, $\Canon(a(u))=\Canon(a(u'))$ by single-valuedness
(Lemma~\ref{lem:wd}) applied to the common proposition, computed against the
same fixed $(\Sigma,\prec,\Phi_t)$. Under $\eqreuse$, $\CanonCell$ of the
common cell coincides, and the reuse key is by definition a function of the
cell.
\end{proof}

\begin{remark}[Fiber-level vs.\ realized nondegeneracy]\label{rem:rnd}
The corpus-side conditions one might state instead---(ND1) there exist
admissible $a_1\neq a_2$ with $a_1\vee a_2\in\mathrm{Dom}(\Canon_c)$; (ND2)
some cell contains two distinct propositions of $\mathrm{Dom}(\Canon_c)$---
are \emph{necessary} for RND1/RND2 respectively (a witness pair realizes the
corresponding fiber configuration) but not sufficient: Condition~FA supplies
formulas, not utterances, and a fixed proposal layer
$D_c:U\to\pow_{\mathrm{fin}}(\mathrm{Fm})$ may never output the surviving
sets a fiber configuration would require. Strictness on $U_{\mathrm{res}}$ is therefore a
joint property of corpus and proposal layer: ND1/ND2 say the fiber is rich
enough to \emph{permit} separation, RND1/RND2 say the deployment
\emph{exhibits} it. Empirically, RND1/RND2 are checkable on logs, since
reading sets, joins, and cells are all recorded pipeline artifacts.
\end{remark}

\begin{corollary}[Quotient observation]\label{cor:quot}
By injectivity of $\Canon$ (Lemma~\ref{lem:wd}), $\Canon(a(u))=
\Canon(a(u'))$ implies $a(u)=a(u')$, hence $u\eqres u'$ and $u\eqreuse u'$:
the resolution and reuse assignments factor through the proposition normal
form. (This direction was invalid under containment-based $F(a)$;
Remark~\ref{rem:exact}.)
\end{corollary}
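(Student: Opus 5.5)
The plan is to chain two facts already established. The first is the injectivity clause of Lemma~\ref{lem:wd}. The second is the inclusion ${\eqres}\subseteq{\eqreuse}$ from Theorem~\ref{thm:hier}. Fix $u,u'\in U_{\mathrm{res}}$. Membership in $U_{\mathrm{res}}$ means, by Definition~\ref{def:agg}, that $a(u),a(u')\in\mathrm{Dom}(\Canon_c)$. So both $\Canon(a(u))$ and $\Canon(a(u'))$ are defined, single-valued and computable by Lemma~\ref{lem:wd}. Restricting to resolved utterances is essential: outside the domain there is nothing to compare.

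The first step is to recover the proposition from its normal form. Suppose $\Canon(a(u))=\Canon(a(u'))=\varphi$. Exact denotation (Definition~\ref{def:canon}) gives $\den{\varphi}=a(u)$ and $\den{\varphi}=a(u')$. Since $\den{\cdot}$ is a function of the formula, $a(u)=a(u')$. This is the argument of Remark~\ref{rem:exact}, and it is exactly where exactness, rather than containment, is used.

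The second step is to unfold the definitions. $a(u)=a(u')$ is literally $u\eqres u'$ by Definition~\ref{def:equiv}(ii). The inclusion ${\eqres}\subseteq{\eqreuse}$ of Theorem~\ref{thm:hier} then yields $u\eqreuse u'$. That inclusion rests on the nonemptiness built into $\mathrm{Dom}(\Canon_c)$, which makes the containing cell unique.

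The third step is the factorization claim. Define maps on the image $\Canon(a(U_{\mathrm{res}}))$ by $\Canon(a(u))\mapsto a(u)$ and $\Canon(a(u))\mapsto$ the cell of $a(u)$. The two steps above show these maps are well defined, so the resolution and reuse assignments factor through $u\mapsto\Canon(a(u))$.

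The final clause concerns what fails under containment. Here I would point to the collapse example in Remark~\ref{rem:exact}: if the normal form is a minimal contradiction $\bot$ for every $a$, then equal normal forms do not force $a(u)=a(u')$, and the implication breaks.

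There is no genuine obstacle. The only point needing care is to invoke injectivity on $\mathrm{Dom}(\Canon_c)$ and not on all of $\Fix(\cl_c)$. This keeps both $\Canon$ and the cell map well defined.
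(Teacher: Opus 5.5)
Your proposal is correct and matches the paper's own argument: exact denotation gives $\den{\Canon(a)}=a$, so injectivity of $\Canon$ (Lemma~\ref{lem:wd}, Remark~\ref{rem:exact}) turns $\Canon(a(u))=\Canon(a(u'))$ into $a(u)=a(u')$, which is $u\eqres u'$ by definition, and the inclusion ${\eqres}\subseteq{\eqreuse}$ of Theorem~\ref{thm:hier} then gives $u\eqreuse u'$. One small refinement: injectivity does not depend on restricting to $\mathrm{Dom}(\Canon_c)$, since exactness gives it wherever a normal form is defined; what the restriction supplies is nonemptiness, and that is what makes the cell of $a(u)$ unique, so it is the cell map (the reuse factorization) that needs the domain, not $\Canon$'s injectivity.
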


\begin{corollary}[Factorization; the quotient is not a relabeling of
$P(c)$]\label{cor:factor}
Let $\mathrm{res}:U_{\mathrm{res}}\to P(c)$ assign each resolved utterance
the cell of $a(u)$ (well-defined by nonemptiness,
Definition~\ref{def:canon}). Then $\eqreuse$ is exactly the kernel of
$\mathrm{res}$, so $\mathrm{res}$ factors as
\[
U_{\mathrm{res}}
\;\twoheadrightarrow\;
\Rc
\;\hookrightarrow\;
P(c),
\]
with the induced map injective, and surjective iff every cell is realized by
some resolved utterance. Hence $\Rc$ is the utterance-side object the
partition induces: it coincides with $P(c)$ only when the deployment
realizes every governed question, and in general it measures \emph{realized
demand}---whence its name--- over the corpus structure---the object on which reuse economics
(hit rates over a query stream) is actually defined.
\end{corollary}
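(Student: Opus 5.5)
The plan is to treat this as a standard first-isomorphism-theorem argument for sets, with one preliminary step: checking that $\mathrm{res}$ is a well-defined function on $U_{\mathrm{res}}$.

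\emph{Well-definedness of $\mathrm{res}$.} For $u\in U_{\mathrm{res}}$, Definition~\ref{def:agg} gives $a(u)\in\mathrm{Dom}(\Canon_c)$. So $a(u)$ is nonempty and contained in some cell. Since $P(c)$ is a partition, two distinct cells are disjoint, and a nonempty set cannot lie inside both. This is exactly the argument recorded in Definition~\ref{def:canon}. Hence ``the cell of $a(u)$'' is unique and $\mathrm{res}(u)$ is defined.

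\emph{Kernel identification.} By Definition~\ref{def:equiv}(iii), $u\eqreuse u'$ means $u$ and $u'$ resolve in the same cell. The phrase ``resolve in cell $C$'' should be read as $a(u)\subseteq C$, which is the operational meaning fixed by the pipeline in \S\ref{sec:gates}. Under that reading, $u\eqreuse u'$ holds iff $\mathrm{res}(u)=\mathrm{res}(u')$. So $\eqreuse$ is literally the kernel $\{(u,u'):\mathrm{res}(u)=\mathrm{res}(u')\}$. The only point needing care is that Definition~\ref{def:equiv}(iii) is stated informally; I will make the identification ``resolves in $C$'' $\iff$ $\mathrm{res}(u)=C$ explicit, using uniqueness from the first step.

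\emph{Factorization.} Define $\bar{\mathrm{res}}:\Rc\to P(c)$ by $[u]\mapsto\mathrm{res}(u)$.
\begin{itemize}
\item It is well-defined because $\eqreuse$ is contained in the kernel.
\item It is injective because the kernel is contained in $\eqreuse$.
\item The quotient projection $U_{\mathrm{res}}\twoheadrightarrow\Rc$ is surjective by construction.
\item The composite is $\mathrm{res}$ by definition.
\end{itemize}
The image of $\bar{\mathrm{res}}$ equals the image of $\mathrm{res}$. So $\bar{\mathrm{res}}$ is surjective iff every cell $C$ equals $\mathrm{res}(u)$ for some resolved $u$, that is, iff every cell is realized.

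\emph{Interpretive conclusion.} The remaining sentences follow directly. $\Rc$ is in bijection with the realized cells $\mathrm{res}(U_{\mathrm{res}})\subseteq P(c)$, and it coincides with $P(c)$ exactly in the surjective case. The claims about reuse economics are interpretive and need no proof.

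\emph{Main obstacle.} There is no real mathematical obstacle. The one thing to do carefully is to pin down the definitional match between Definition~\ref{def:equiv}(iii) and $\mathrm{res}$ through the nonemptiness argument, so the kernel statement is an identity and not merely an inclusion.
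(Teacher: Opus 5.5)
Your proposal is correct and follows the paper's proof: you identify $\eqreuse$ with the kernel of $\mathrm{res}$ via Definition~\ref{def:equiv}(iii), and then use the universal property of the quotient by a kernel to get the factorization, the injectivity of the induced map, and the surjectivity criterion. Your explicit well-definedness check via nonemptiness, and your reading of ``resolves in $C$'' as $a(u)\subseteq C$, spell out what the paper's one-line proof leaves implicit.
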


\begin{proof}
By Definition~\ref{def:equiv}(iii), $u\eqreuse u'$ iff
$\mathrm{res}(u)=\mathrm{res}(u')$, which is the definition of the kernel;
the factorization and injectivity of the induced map are the universal
property of quotients by a kernel, and surjectivity onto $P(c)$ is by
construction equivalent to every cell having a preimage.
\end{proof}

\begin{remark}[What $\eqreuse$ licenses]\label{rem:whatreused}
The two coarser relations name two different reusable objects, and the
distinction should be quoted precisely. $\eqreuse$ is \emph{query-key}
identity: it licenses reuse of the governed query key
$\CanonCell(C)$ and, with it, the certified answer space
$\mathrm{Ans}(Q_C)$---the CAS lookup namespace. It does \emph{not} by
itself license reuse of a particular stored answer: RND2 exhibits
$a(u)\neq a(u')$ inside one cell, with distinct certified representatives
$\Canon(a(u))\neq\Canon(a(u'))$. Reuse of a particular answer additionally
requires resolution equivalence $\eqres$---the same exact certified
proposition---or an independent applicability certificate that the stored
answer's proposition contains, at the required specificity, the one resolved
now. In slogan form: $\eqreuse$ reuses the \emph{question}; $\eqres$
reuses the \emph{answer}.
\end{remark}

\begin{remark}[The answer space of a governed question; where uniqueness
lives]\label{rem:ansspace}
A natural objection runs: since a certified answer carries a unique
certificate, should not each governed question determine a unique certified
answer---and does the question-to-answer multiplicity not break the
mapping? The objects resolve it. First, question and canonical form are one
object, not two: by Lemma~\ref{lem:wd} and exactness, $C\mapsto\CanonCell(C)$
is a bijection between cells and canonical forms, so canonicalization is
how an utterance \emph{reaches} the governed question, not a second map
applied after it. Second, every uniqueness holds at the correct arrow. The
resolved pipeline is the chain
\[
u \;\longmapsto\; a(u) \;\longmapsto\; C \;\longmapsto\; \CanonCell(C),
\qquad
a(u) \;\longmapsto\; \bigl(\Canon(a(u)),\ \mathrm{Cert}_c(a(u))\bigr),
\]
and each displayed arrow is a function: the join is deterministic
(Definition~\ref{def:agg}); the cell of $a(u)$ is unique because
$a(u)\neq\emptyset$ and $P(c)$ partitions (Definition~\ref{def:canon});
cell to canonical form is the bijection above; and, per fiber snapshot
$L(c,t)$, the certificate is a function of the answer (Condition~PC),
re-minted on theory change. The one direction that is \emph{not} a
function---question to answer---is not supposed to be: $\mathrm{Ans}(Q_C)$
is a $\sqsubseteq$-ordered family whose members are the same resolution at
different material specificity, RND2 is precisely its realized
multiplicity, and \emph{which} member an interaction receives is fixed by
the accumulated case facts through $a(u)$, not by the question alone. The
store mirrors this: the key is the question, the entries are the answers,
each with its own certificate---a certificate family per question, never
two certificates per answer. The one-answer-per-question intuition is the
fiber condition $\neg$ND2 (every cell contains a single proposition of
$\mathrm{Dom}(\Canon_c)$), under which $\eqres$ and $\eqreuse$ coincide;
it cannot in general be enforced by refining the partition, since distinct
certified answers in a cell typically nest and nested propositions admit no
disjoint separation, and it forfeits the applicability-reuse tier. A
related question is open: when $\mathrm{Ans}(Q_C)$ has a
$\sqsubseteq$-greatest element---the coarsest certified proposition of the
cell at threshold---that element is a canonical \emph{default answer},
recovering a distinguished question-to-answer section; characterizing the
fibers on which such defaults exist is left to future work.
\end{remark}

\begin{remark}[Scope of the invariance]
Theorem~\ref{thm:hier} is invariance relative to the reading sets: the
governed machinery cannot distinguish utterances whose admissible readings
agree modulo the theory, and reuses across utterances that agree at the
coarser resolution or cell level even where readings differ---the hierarchy
is exactly the statement that reuse equivalence is a strict coarsening of
reading-level equivalence. What no theorem here says is that two
utterances a human would call paraphrases always receive equal reading sets;
that is a property of the proposal layer. The theorems' force is that the
residual risk is localized entirely in $D_c(u)$, an auditable, loggable
artifact, rather than diffused through the pipeline.
\end{remark}

\section{Decidability and complexity, by regime}\label{sec:complexity}

\begin{theorem}[Conditional decidability]\label{thm:dec}
Under Assumption~A1 and a finite materialized fiber $(L(c),P(c),\sigma(c))$,
the full pipeline map
\[
u \;\longmapsto\;
\{\textsf{non-canonical route}\}\;\uplus\;M\;\uplus\;
\{\,Q_C : C\in P(c)\,\}
\]
is total computable on all of $U$: Gate-1 rejects ($C(u)=0$) receive the
routing outcome, utterances in $U_1\setminus U_{\mathrm{res}}$ receive
their mode $A(u)\in M$, and resolved utterances receive their governed
question. All corpus-side operations---fixpoint membership, the
$\sigma$-comparison, cell incidence, the join aggregation $a(u)$, the CP
check, and both normal forms---are decidable with no assumption on the
proposal layer.
\end{theorem}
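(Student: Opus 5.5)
The plan is to prove the corpus-side claims first, with no reference to the proposal layer, and then assemble the pipeline map as a finite composition of those decision procedures with the two functions supplied by Assumption~A1.

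\emph{Corpus-side decidability.} The fiber is finite and materialized, with $n=|W_c|$ and propositions stored as $n$-bit vectors.
\begin{itemize}[nosep]
\item \emph{Fixpoint membership.} Deciding $X\in\Fix(\cl_c)$ is a lookup in the materialized list of fixed points. Equivalently, compute $\cl_c(X)$ from the materialization and compare it with $X$.
\item \emph{The $\sigma$-comparison.} This is a bitwise inclusion test, costing $O(n)$.
\item \emph{Cell incidence.} For a nonempty $X$, scan the finitely many cells $C\in P(c)$ and test $X\subseteq C$.
\item \emph{Denotations and the join.} For a closed formula $\varphi$, $\den{\varphi}$ is computed by deciding $w\models\varphi$ for each of the finitely many effectively presented $w\in W_c$. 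The join $a(u)$ is then a finite union followed by one application of $\cl_c$.
\item \emph{The CP check.} This is fixpoint membership tested on each of the finitely many cells.
\item \emph{Both normal forms.} These are computable by Lemma~\ref{lem:wd}, which relies on Condition~FA to guarantee that the search terminates.
\end{itemize}
None of these steps mentions $C$ or $D_c$, which gives the ``no assumption on the proposal layer'' clause.

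\emph{Totality of the pipeline map.} Given $u\in U$, compute $C(u)$; this is total by A1. If $C(u)=0$, output the non-canonical route. Otherwise compute the finite set $D_c(u)$, again total by A1, and run Gates~2 and~3 on each of its finitely many members using the decision procedures above. Record the first failing gate per reading so that the mode can be reported as in \S\ref{sec:gates}. This yields $D_c^\ast(u)$.

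The remaining branches are then fixed in order:
\begin{enumerate}[nosep]
\item If $D_c^\ast(u)=\emptyset$, output the corresponding mode: inadmissible or under-specific when that was the first failing gate, and empty survivor set otherwise. Computing this requires a fixed, deterministic rule for choosing a mode when readings failed at different gates.
\item If the nonempty survivor denotations are not all contained in a single cell, output cell-ambiguous.
\item Otherwise compute $a(u)$ and check $a(u)\in\mathrm{Dom}(\Canon_c)$. This check combines nonemptiness, fixpoint membership (automatic for a join), the $\sigma$-comparison and cell incidence. On failure, output aggregation failure.
\item On success, output $Q_C$ for the unique cell $C\ni a(u)$. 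Uniqueness follows from nonemptiness, as in Definition~\ref{def:canon}, and $Q_C$ can be presented concretely by the computable key $\CanonCell(C)$.
\end{enumerate}

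The three outcome sets are disjoint, and the case split is exhaustive. It therefore defines the map on all of $U$, and it agrees with the mode map $A$ of Definition~\ref{def:split} on $U_1\setminus U_{\mathrm{res}}$.

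The main obstacle is bookkeeping rather than depth. I must make sure the mode assignment is a well-defined function, with a fixed tie-breaking discipline among readings that fail different gates, so that the branch structure matches the definitions of $U_{\mathrm{res}}$ and $A$ exactly. I must also state precisely that ``$Q_C$'' is output via its finite presentation $\CanonCell(C)$, not as an infinite object; the injectivity in Lemma~\ref{lem:wd} justifies this representation.
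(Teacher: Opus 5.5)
Your proof is correct and follows the paper's argument. A1 supplies $C$ and $D_c$ as total computable functions; every later step is either a finite check against the materialized fiber or computable by Lemma~\ref{lem:wd}; and the exhaustive, mutually exclusive case split composes these into a total computable map. You also make explicit three points the paper leaves implicit: computing $\den{\varphi}$ world by world via decidable satisfaction, fixing a deterministic mode rule when readings fail at different gates, and outputting $Q_C$ through a finite presentation. None of these changes the argument.
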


\begin{proof}
$C$ is total computable by A1, so the Gate-1 routing branch is decided for
every $u\in U$; on $U_1$, $D_c$ is total computable by A1, and each
subsequent step is a finite check or computation against the
materialization (Gates 2--3, the mode assignment from the gate trace, cell
incidence, the join $a(u)$ as one closure application, the post-aggregation
domain check, and $\cl_c(C)=C$ per cell for CP) or is computable by
Lemma~\ref{lem:wd} ($\CanonCell$, $\Canon$). The three branches of the
codomain are exhaustive and mutually exclusive by construction of the
pipeline, and the composition of totally computable steps over finite data
is totally computable.
\end{proof}

\begin{remark}[Status of Theorem~\ref{thm:dec}]
Given A1 and a finite materialized fiber, the theorem is deliberately
immediate: its content lies in the \emph{placement} of the assumptions---all
non-computability is quarantined in the proposal layer, and everything
corpus-side is unconditionally decidable---not in the composition argument.
The load-bearing results of the note are the hierarchy
(Theorem~\ref{thm:hier}) and the compatibility lemma (Lemma~\ref{lem:cp});
the decidability statement records that the quotient they describe is
effectively computable in deployment.
\end{remark}

\begin{proposition}[Complexity, three regimes]\label{prop:cx}
Let $n=|W_c|$, let $P_c$ be the program generating $\cl_c$, and let $m$ be
the number of literal occurrences in the grounding of $P_c$.
\begin{enumerate}[label=(\roman*),nosep]
\item \emph{Ground Horn tier.} If $P_c$ is a ground (propositional) Horn
program, the closure of a fact set is computable in time $O(m)$ by
unit-propagation-style indexed forward chaining, in the manner of the
linear-time Horn satisfiability algorithms of Dowling and Gallier
\cite{dowling} (the citation supports the propagation technique; the closure
statement is the standard consequence under indexed representations). With
$L(c)$, $P(c)$ materialized offline and propositions, cells, and $\sigma(c)$
held as $n$-bit vectors, Gate~2 is a lookup and Gate~3 a bit-vector
inclusion test in $O(n)$. Cell assignment is stated precisely: containment
against a \emph{given} candidate cell costs $O(n)$; \emph{locating} the
containing cell na\"ively costs $O(|P(c)|\,n)$ over the partition, reducible
to $O(n)$ amortized under a precomputed world-to-cell index (any world of
$a(u)$ names the candidate, leaving one $O(n)$ containment check). Each
pipeline operation is polynomial in $m+n$. This is the regime of the
production architecture \cite{gas,capdatalog}.
\item \emph{Fixed-program (data) complexity.} For non-ground Datalog with
$P_c$ fixed, data complexity is measured in the size $N$ of the extensional
input structure (the EDB), in which Datalog is \textsc{PTime}-complete
\cite{abiteboul}. In this architecture $W_c$ is \emph{extensionally
materialized}---the bit-vector representation over the explicit world set
is the input structure---so $n\leq N$ and materialization and all gate
checks are polynomial in $N$. Without that materialization assumption,
polynomiality in the number of semantic worlds is not implied and is not
claimed.
\item \emph{Combined complexity.} With both program and data varying, Datalog
is \textsc{ExpTime}-complete \cite{abiteboul}; no polynomial claim is made in
this regime, and none is needed: corpus compilation fixes $P_c$ offline, so
query-time operation is governed by (i)--(ii).
\end{enumerate}
Both normal forms are computable (Lemma~\ref{lem:wd}) but their na\"ive
enumeration is not polynomial; in deployment $\CanonCell(C)$ is computed
offline once per cell and cached, and $a(u)$ costs one closure application,
so the query-time cost of canonicalization is the cell assignment of
(i)--(ii), not formula search.
\end{proposition}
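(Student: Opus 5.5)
The plan is to treat the three regimes separately, each as a direct appeal to the cited algorithmic results plus the bit-vector representation fixed in \S\ref{sec:gates}; no new algorithmic idea is needed, only careful bookkeeping of the input measures.

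\emph{Ground Horn tier.} First I will establish the $O(m)$ closure bound. Given a fact set, build an index from each atom to the clause bodies it occurs in, and give each clause a counter of unsatisfied body literals. Then run a queue-based forward chaining: when an atom becomes true, decrement the counters of the clauses it touches, and when a counter reaches zero, enqueue the head. Each literal occurrence is touched at most once, which gives $O(m)$, as in Dowling--Gallier \cite{dowling}. The gate costs follow from the representation. Gate~2 is membership of a precomputed bit vector in the materialized $\Fix(\cl_c)$; I will hash it, and I must be careful to state this as a lookup in a hashed table of $O(n)$-bit keys. Gate~3 is the test $\den{\varphi}\wedge\neg\sigma(c)=0$, costing $O(n)$. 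For cell assignment there are two cases:
\begin{itemize}[nosep]
\item against a given cell, containment is again $O(n)$;
\item to locate the cell, scan $P(c)$ for $O(|P(c)|\,n)$, or pick any world $w\in a(u)$ (nonempty on the domain), read its cell from a precomputed world-to-cell array, and do one $O(n)$ containment check.
\end{itemize}
The join $a(u)$ is a union of at most $|D_c^\ast(u)|$ vectors followed by one closure, which is polynomial in $m+n$.

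\emph{Data complexity.} With $P_c$ fixed, I will invoke the \textsc{PTime} data complexity of Datalog \cite{abiteboul}. The point to argue is that $n\le N$ because $W_c$ is materialized as part of the EDB. Each gate operation is polynomial in $n$ and each closure is polynomial in $N$, so the whole pipeline is polynomial in $N$. I will also note explicitly that this step is where the materialization hypothesis is used, which justifies the disclaimer in the statement.

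\emph{Combined complexity.} This regime needs only the citation of \textsc{ExpTime}-completeness and the observation that no bound is claimed. For the normal forms, computability is Lemma~\ref{lem:wd}. Non-polynomiality of the naive enumeration follows because the number of formulas of length $\le\ell$ is exponential in $\ell$, and a minimal defining formula may have length linear in $n$, for example a disjunction of ground descriptions, so the $\prec$-enumeration may inspect exponentially many candidates. Offline caching then reduces the query-time cost to cell assignment.

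The main obstacle is the $O(m)$ closure claim: it is a consequence of, not literally, the Horn-SAT result. I would adapt the Dowling--Gallier counter argument explicitly, initializing with the given facts in place of unit clauses, and verify that the output is the least model containing them, i.e.\ $\cl_c$ applied to the fact set.
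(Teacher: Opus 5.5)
Your proposal is correct and follows essentially the paper's route. The paper gives no separate proof; its justification is what is embedded in the statement: Dowling--Gallier counter-based propagation for the $O(m)$ closure, bit-vector tests for the gates, the world-to-cell index for locating the cell, the cited \textsc{PTime}/\textsc{ExpTime} results with $n\le N$ from extensional materialization, and offline caching of $\CanonCell$.

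One small repair in the normal-form remark: a disjunction of ground descriptions only bounds the minimal defining formula from above, so it cannot establish the exponential lower bound. Use counting instead. Each formula has exactly one denotation, so distinct subsets of $W_c$ have first defining formulas at distinct $\prec$-positions. Hence some nonempty proper $S$ sits at position roughly $2^n$, and $S$ can be made a cell via the partition $\{S,W_c\setminus S\}$, forcing about $2^n$ failed denotation tests.
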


\section{Three orthogonal safety properties}\label{sec:safety}

The closure proves less than ``safety'' and the paper must say exactly what.
Three properties come apart.

\begin{definition}\label{def:safety}
A pipeline run has:
\begin{enumerate}[label=(\alph*),nosep]
\item \emph{policy admissibility} if every emitted proposition lies in
$\Fix(\cl_c)$, clears $\sigma(c)$, is cell-contained, and carries a
certificate bound to its fiber---``can we say it?'';
\item \emph{factual grounding} if every premise on which the emitted
proposition's derivation rests is a verified fact of the live case---``is it
supported?'';
\item \emph{intent fidelity} if the emitted answer resolves the question the
speaker in fact intended---``is it what the user asked?''.
\end{enumerate}
\end{definition}

\begin{proposition}[Policy admissibility for arbitrary proposers]
\label{prop:auth}
For every proposal layer satisfying A1---including an adversarial one---every
run of the pipeline has policy admissibility: the emitted proposition is
precisely $a(u)$ (equivalently its certified representative $\Canon(a(u))$,
whose denotation is $a(u)$ by exactness), which has passed the fixpoint,
threshold, and cell-containment checks \emph{after} aggregation, computed by
the verification layer against the materialized fiber, independently of how
the proposals were produced; and by Condition~PC the emission carries
$\mathrm{Cert}_c(a(u))$, a derivation witness bound to the fiber data
$(\Sigma,\Phi_t,c,t)$, so certificate possession is derived, not assumed.
\end{proposition}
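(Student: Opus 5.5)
The argument is a case analysis on the pipeline's outcome, followed by unwinding Definition~\ref{def:safety}(a) clause by clause. Fix an arbitrary proposal layer $(C,D_c)$ satisfying A1; no property of it beyond totality is used, which is what makes the statement hold for adversarial proposers. By Theorem~\ref{thm:dec} every run terminates in exactly one branch: non-canonical routing, an abstention mode in $M$, or resolution. In the first two branches nothing is emitted, so policy admissibility holds vacuously (the universal quantifier ``every emitted proposition'' ranges over the empty set). The substantive case is $u\in U_{\mathrm{res}}$.

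For a resolving run, I would first pin down \emph{what} is emitted. The pipeline's only emitted content is $a(u)$, presented through $\Canon(a(u))$. By exactness (Definition~\ref{def:canon}, Remark~\ref{rem:exact}), $\den{\Canon(a(u))}=a(u)$, so the two descriptions denote the same proposition. Lemma~\ref{lem:wd} guarantees that $\Canon(a(u))$ exists and is single-valued.

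Next I would discharge the four clauses. By Definition~\ref{def:agg} the run resolves iff $a(u)\in\mathrm{Dom}(\Canon_c)$, and membership in that set is literally the conjunction of three facts: $a(u)\in\Fix(\cl_c)$, $a(u)\sqsubseteq\sigma(c)$, and $a(u)\subseteq C$ for some cell $C$, with nonemptiness making that cell unique. Fixpoint membership is in fact automatic, since $a(u)=\cl_c(\cdot)$ and $\cl_c$ is idempotent; this is worth noting as a redundant check. Condition~TC ensures the threshold and containment tests are applied to the join itself, not merely to the individual readings. That is the point where an adversary might otherwise smuggle in an under-specific or cell-escaping join, and Lemma~\ref{lem:cp} records that such an escape is caught rather than absorbed. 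For the fourth clause, $a(u)\in\Fix(\cl_c)$ lies in the domain of $\mathrm{Cert}_c$, which Condition~PC makes total and computable on $\Fix(\cl_c)$ and bound to $(\Sigma,\Phi_t,c,t)$. So the emission carries $\mathrm{Cert}_c(a(u))$ as a computed value, not a claim supplied by the proposer.

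The step needing the most care is the independence claim. It requires showing that every quantity checked is computed by the verification layer from the materialized fiber, so that the proposer's only influence is through the choice of $D_c(u)$. I would argue it as follows: after Gate~1, the proposer contributes only a finite formula set; each formula enters only through its denotation, which is evaluated against $W_c$; and all subsequent operations are corpus-side. This is exactly the denotational argument already used in the proof of Theorem~\ref{thm:hier}. Hence whatever set the adversary supplies, the checks are the same decidable tests on the same fiber, and their passing is a property of $a(u)$ alone.
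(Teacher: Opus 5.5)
Your proposal is correct and follows the paper's argument. The paper gives no separate proof; its justification sits inside the statement itself. By Definition~\ref{def:agg}, a run resolves exactly when $a(u)\in\mathrm{Dom}(\Canon_c)$, which already contains the fixpoint, threshold, and cell-containment checks; exactness identifies $\Canon(a(u))$ with $a(u)$; and Condition~PC supplies the fiber-bound certificate. Your additions are sound and make the unwinding more complete: the vacuous treatment of the non-emitting branches, and the remark that fixpoint membership already follows from idempotence of $\cl_c$.
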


\begin{remark}[What the closure does not guarantee]\label{rem:fidelity}
Neither factual grounding nor intent fidelity follows from
Proposition~\ref{prop:auth}, and the two fail differently. \emph{Grounding:}
an adversarial (or merely wrong) proposer can supply a reading whose
denotation is perfectly admissible---``customer has documented hardship''---
while the fact is false of this customer; the closure checks licensure, not
evidence, so the pipeline would emit an authorized but factually unsupported
answer. Grounding is enforced by a separate evidence certificate layer---the
verified-fact discipline whose records the Compliance Certificate Token's
grounding field carries---not by the canonicalization theorems of this note,
which make no claim about it. \emph{Fidelity:} a mis-canonicalized reading can pass every gate
and yield a fully certified, fully grounded answer to a \emph{different}
admissible question---cancellation rights mapped to refund eligibility---
which in a regulated setting is a material failure. The architecture's
guarantees must be quoted precisely: the closure bounds \emph{what may be
said}; it does not verify \emph{what is true of the case} or \emph{what was
asked}. The mitigations are structural, not theorematic:
Theorem~\ref{thm:hier} localizes fidelity risk in the reading set $D_c(u)$;
the natural-language normal form of $\CanonCell$ is surfaced for
confirmation before answering (binding the answer to an explicit question
the user can repudiate); and the certificate records the canonical question
alongside the answer and its grounding, so failures of either kind are
auditable post hoc. Measuring grounding and fidelity rates is empirical
companion work, outside the present theorems.
\end{remark}

\section{Elicitation over cell-decided predicates}\label{sec:elicit}

Cell ambiguity and under-specificity are resolved inside the fixed fiber by
monotone accumulation: admissible facts grow, live cells are eliminated,
until one cell remains or none can be separated. The termination claim
requires the elicitation vocabulary to interact with the partition cleanly,
so that ``indistinguishable'' is a genuine equivalence.

\begin{assumption}[A2$'$: cell-decided separability]\label{a2}
The admissible elicitation vocabulary $\mathcal{E}_c$ consists of decidable
predicates that are \emph{cell-decided}: each $f\in\mathcal{E}_c$ has a
constant truth value on every cell of $P(c)$ (write $f(C)\in\{0,1\}$).
This is natural when $\mathcal{E}_c$ is drawn from the material atoms that
generate the partition. The \emph{observational signature} of a cell is
$\mathrm{sig}(C)=(f(C))_{f\in\mathcal{E}_c}$; $f$ \emph{separates} $C_i,C_j$
iff $f(C_i)\neq f(C_j)$; and $C_i\sim_{\mathcal{E}_c}C_j$ iff
$\mathrm{sig}(C_i)=\mathrm{sig}(C_j)$---an equivalence relation by
construction. A live-cell set satisfies A2$'$ when its cells have pairwise
distinct signatures.
\end{assumption}

\begin{assumption}[A3: responsiveness]\label{a3}
Each reply to an asked $f\in\mathcal{E}_c$ is an admissible ground fact that
decides $f$ (an \emph{informative} reply). Refusals, ambiguous replies, and
non-answers are permitted but do not count against the bound.
\end{assumption}

\begin{assumption}[A4: target consistency]\label{a4}
There is a fixed intended cell $C^\star\in P(c)$ (equivalently, a fixed
intended world $w^\star\in W_c$ with $C^\star$ its cell) such that every
informative reply reports the value of the asked predicate at the target:
$r(f)=f(C^\star)$. A3 alone constrains replies to be admissible and
decisive; it does not make them truthful or mutually consistent, and
soundness below is exactly what A4 adds.
\end{assumption}

\begin{theorem}[Termination and soundness of elicitation]\label{thm:elicit}
Consider any policy that, while
$|\{\mathrm{sig}(C):C\in L_k\}|\geq 2$, selects live cells
$C_i,C_j\in L_k$ with $\mathrm{sig}(C_i)\neq\mathrm{sig}(C_j)$ and some
$f\in\mathcal{E}_c$ with $f(C_i)\neq f(C_j)$ (which exists, since
signatures determine separation), asks $f$, and eliminates the cells whose
signature the reply contradicts. Write $L_k$ for the live set after $k$
informative replies. Then:
\begin{enumerate}[label=(\roman*),nosep]
\item \emph{(Termination, A2$'$--A3.)} The procedure halts after at most
$|P(c)|-1$ informative replies, in a singleton, in a set with a single
shared signature (case (iii)), or in the empty set (case (iv)).
\item \emph{(Soundness, A2$'$--A4.)} $C^\star\in L_k$ for every $k$: an
informative reply eliminates only cells whose signature disagrees with the
reported value $f(C^\star)$, which never includes $C^\star$. Hence under A4
the live set can never become empty; and if the live signatures are
pairwise distinct, the procedure halts in $L_k=\{C^\star\}$---a sound
resolution of the intended governed question.
\item \emph{(Vocabulary limit.)} If the live set's signatures are not
pairwise distinct, the procedure halts as soon as the live set is contained
in a single ambient $\sim_{\mathcal{E}_c}$-class, and reports governed
abstention naming \emph{the ambient equivalence class containing the live
set}. The live set itself need not be a full class---earlier evidence may
already have eliminated some of the class's members---but the ambient class
is a genuine equivalence class, since $\sim_{\mathcal{E}_c}$ is an
equivalence by construction, and the report is the diagnostic that the
elicitation vocabulary, not the corpus, is the binding constraint.
\item \emph{(Inconsistent evidence.)} Without A4, an empty live set is
possible and must be reported as \emph{no compatible cell}: the replies
were individually admissible but jointly inconsistent with every cell (or
untruthful relative to any fixed target). This outcome is distinguished
from the \emph{certified no-match} of the corpus---asserting that no
admissible cell answers the query requires evidence that is truthful and
complete, which A3 alone does not supply. Operationally, no-compatible-cell
routes to human escalation, not to a certified negative.
\item \emph{(Non-responsiveness.)} If A3 fails, no bound on wall-clock
rounds is claimed; the bound counts informative replies only, and
accumulation applies $\cl_c$ to admissible facts, so every intermediate
state is admissible regardless.
\end{enumerate}
\end{theorem}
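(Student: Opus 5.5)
The argument tracks a potential function on the live set, namely the number of distinct live signatures $s_k=|\{\mathrm{sig}(C):C\in L_k\}|$, together with the target-membership invariant $C^\star\in L_k$. First we make the elimination rule precise: after asking $f$ and receiving the informative reply $r(f)\in\{0,1\}$ (A3 guarantees the reply decides $f$), set $L_{k+1}=\{C\in L_k: f(C)=r(f)\}$. This is well-defined because A2$'$ makes $f$ constant on each cell. Non-informative replies leave $L_k$ unchanged and are not counted. This already disposes of (v): the bound is stated in informative replies only. Every intermediate state is obtained by applying $\cl_c$ to admissible facts, so it stays admissible by idempotence and extensivity of $\cl_c$.

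\emph{Termination (i).} The policy picks $C_i,C_j\in L_k$ and $f$ with $f(C_i)\neq f(C_j)$, so whichever value $r(f)$ takes, at least one of $C_i,C_j$ is eliminated. Hence $|L_{k+1}|\le|L_k|-1$. Since $|L_0|\le|P(c)|$ and the loop only runs while $s_k\ge2$, which forces $|L_k|\ge2$, at most $|P(c)|-1$ informative replies occur. At halting, $s_k\le1$. Thus $L_k$ is empty ($s_k=0$), a singleton, or a set of at least two cells with one shared signature; these are cases (iv), the singleton case, and (iii).

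\emph{Soundness (ii).} We use induction on $k$ with the invariant $C^\star\in L_k$. It holds at $k=0$ assuming the initial live set contains the target's cell, which we take as part of the setup: the initial live set consists of the cells compatible with facts consistent with $w^\star$. For the inductive step, A4 gives $r(f)=f(C^\star)$, so $C^\star$ satisfies the retention test. The live set is therefore never empty. If live signatures are pairwise distinct, halting with $s_k\le1$ forces $|L_k|\le1$, hence $L_k=\{C^\star\}$.

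\emph{Vocabulary limit (iii) and inconsistent evidence (iv).} For (iii), halting with $s_k=1$ means every live cell has the same signature, so $L_k$ lies inside a single $\sim_{\mathcal{E}_c}$-class. This class is an equivalence class because $\sim_{\mathcal{E}_c}$ is the kernel of $\mathrm{sig}$. By construction the policy halts exactly at the first such $k$. For (iv), we need an explicit counterexample without A4. Take two cells with $f(C_1)=1$ and $f(C_2)=0$, plus a predicate $g$ with $g(C_1)=0$ and $g(C_2)=1$. If the replies are $r(f)=1$ and then $r(g)=1$, the first reply eliminates $C_2$ and the second eliminates $C_1$, leaving $L=\emptyset$. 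The second question is only asked if the policy still has two live signatures, so we should either take three cells or observe that the inconsistency can arise across rounds as they occur. Distinguishing this outcome from certified no-match is definitional.

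The main obstacle is the base case of soundness: the theorem never states that $C^\star\in L_0$. I would make this explicit as an initialization hypothesis, namely that $L_0$ is the set of cells meeting the accumulated consistent evidence, rather than try to derive it. A second point needing care is the counterexample for (iv), which must respect the policy's precondition of having at least two live signatures before each question.
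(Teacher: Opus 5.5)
\textbf{Gap: part (iv).}

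Your arguments for (i), (ii), (iii) and (v) follow the paper's proof. The live count strictly decreases because the asked $f$ separates two live cells. The invariant $C^\star\in L_k$ under A4 holds by induction. The policy guard fails once a single signature remains, and non-informative replies leave $L_k$ unchanged. Your remark that $C^\star\in L_0$ is never stated is fair: the paper's ``induction gives'' assumes it silently too.

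The gap is in (iv). Your two-cell witness violates the policy guard, as you noticed. Taking three cells cannot repair it, because the step you use in (i) has a dual. With your rule $L_{k+1}=\{C\in L_k: f(C)=r(f)\}$, and live cells $C_i,C_j$ with $f(C_i)\neq f(C_j)$, whichever value $r(f)\in\{0,1\}$ takes, exactly one of $C_i,C_j$ satisfies $f(C)=r(f)$ and survives. Hence $L_k\neq\emptyset$ implies $L_{k+1}\neq\emptyset$ for every reply sequence, with or without A4. No configuration of cells and replies empties the live set unless $L_0=\emptyset$. Under your formalization, the possibility claim in (iv) has no witness at all.

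This is also the semantics the paper's proof of (i) uses: a reply contradicts \emph{precisely} the live cells with the opposite $f$-value. The paper's proof of (iv) only asserts that the intersection ``can be empty'', so there is no construction there to borrow.

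To make (iv) non-vacuous, use the latitude A3 actually leaves. A reply must be an admissible ground fact that decides $f$, not one that decides \emph{only} $f$. For example, take live set $\{C_1,C_2\}$ with $f(C_1)=1$, $f(C_2)=0$, $g(C_1)=0$, $g(C_2)=1$. A reply to $f$ consisting of the ground facts $f$ and $g$ (assumed admissible) contradicts both signatures and empties the live set.

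Adopting that reading has two costs:
\begin{itemize}
\item (i) must say that \emph{at least one} cell is eliminated per reply; the $|P(c)|-1$ bound still holds.
\item (ii) needs A4 strengthened to truthfulness of the whole reply at $C^\star$, since A4 as written constrains only $r(f)$, and extra false content could eliminate $C^\star$.
\end{itemize}
The alternative is to read (iv) as a reporting obligation conditional on emptiness, which needs no witness. In that case you should state explicitly that under single-predicate replies and $L_0\neq\emptyset$ the case cannot occur.
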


\begin{proof}
(i) Because each $f$ is cell-decided, an informative reply assigning $f$ its
value contradicts precisely the live cells $C$ with $f(C)$ opposite to the
reply, of which there is at least one when $f$ separates two live cells; the
live count strictly decreases and is finite, so at most $|P(c)|-1$ strict
decreases reach a halting configuration. (ii) By A4 the reported value is
$f(C^\star)$, so $C^\star$ is never among the contradicted cells; induction
gives $C^\star\in L_k$ for all $k$, whence $L_k\neq\emptyset$, and when
signatures are pairwise distinct the halting singleton must be
$\{C^\star\}$. (iii) When all live cells share one signature, no
$f\in\mathcal{E}_c$ separates any pair (signatures determine separation),
the policy's guard fails, and the live set lies in the
$\sim_{\mathcal{E}_c}$-class of that shared signature; containment, not
equality, is claimed. (iv) Without the invariant of (ii), each reply removes
a signature-determined subset and the intersection of the surviving
constraints can be empty; emptiness certifies only that no cell is
consistent with all replies. (v) Non-informative replies leave the live set
unchanged; admissibility is preserved since $\cl_c$ is applied to admissible
inputs and $P(c)$ partitions $W_c$.
\end{proof}

The base/fiber composition of \cite{gas} is unchanged: \emph{which} closure
is in force narrows contravariantly with context refinement; elicitation runs
monotonically inside that closure. Ambiguity about the operative context is
handled at the base (rebinding $c$), ambiguity about the question in the
fiber, and the two are never traded against each other.

\section{Concluding remark}

The central object of this note is the realized governed reuse space
$\Rc = U_{\mathrm{res}}/\!\eqreuse$, reached by the surjection
$U_{\mathrm{res}}\twoheadrightarrow\Rc$: the partition of \emph{resolved}
conversational demands into governed reuse classes, with abstentions held
apart as terminal outcomes of the mode map. Its claims,
stated at their honest strength: the cell normal form (presenting the
governed query $Q_C$, which is an operational object and deliberately not a
partition-semantics question) and the proposition normal form (defined by
exact denotation, hence injective) are single-valued computable maps on
explicit domains (Lemma~\ref{lem:wd}); resolution aggregation is a
conservative design operator whose one non-obvious hazard---closure-induced
cross-cell escape---is characterized exactly by the closure--partition
compatibility condition and detected at the domain check where the condition
fails (Lemma~\ref{lem:cp}); the three equivalences
${\eqread}\subseteq{\eqres}\subseteq{\eqreuse}$ form a refinement chain,
strict under \emph{realized} nondegeneracy conditions---a joint property of
corpus and proposal layer, checkable on deployment logs
(Figure~\ref{fig:quotients})---along which the pipeline's outputs are
invariant, so reuse equivalence is a provable coarsening of reading-level
equivalence, with abstention modes explicitly outside the invariance
(Theorem~\ref{thm:hier}, Remark~\ref{rem:trace}), and with query-key reuse
distinguished from particular-answer reuse and $\eqreuse$ identified as the
kernel of the resolution map---so the quotient measures realized demand, not
a relabeling of the answer partition
(Corollary~\ref{cor:factor}, Remark~\ref{rem:whatreused}); the whole map is decidable conditional
on a computable proposal layer---deliberately immediate, with the content in
the placement of the assumptions---with polynomial checks in the ground and
fixed-program regimes only over bit-vector representations
(Theorem~\ref{thm:dec}, Proposition~\ref{prop:cx}); and of the three
orthogonal safety properties---policy admissibility, factual grounding,
intent fidelity---the closure proves exactly the first, for arbitrary
proposers, with certificate possession derived from the proof-producing
closure rather than asserted
(Proposition~\ref{prop:auth}, Remark~\ref{rem:fidelity}). Elicitation over
cell-decided predicates terminates, is sound under target consistency---the
intended cell is an invariant of the live set---and reports its two failure
modes honestly: a vocabulary limit as containment in an ambient
observational equivalence class, and inconsistent evidence as no compatible
cell, distinct from a certified no-match (Theorem~\ref{thm:elicit}). The
order theory underneath is
classical \cite{tarski}; the contribution is the characterization of when
two different conversational demands are legitimately the same reusable
governed object---the question on which certified-reuse economics ultimately
rests.

\paragraph{Disclosure} The authors are affiliated with Minerva CQ, which has
commercial interests in AI-governance tooling that builds on these results.


\begin{thebibliography}{9}\small
\bibitem{gas} C.~Spera, R.~Garcia, Certified Resolution: a formal theory of
governed answer spaces for enterprise AI, Minerva CQ working paper (2026).
\bibitem{gro} J.~Groenendijk, M.~Stokhof, Studies on the Semantics of
Questions and the Pragmatics of Answers, PhD thesis, Univ.\ of Amsterdam
(1984).
\bibitem{belnap} N.D.~Belnap, T.B.~Steel, The Logic of Questions and Answers,
Yale University Press, 1976.
\bibitem{dowling} W.F.~Dowling, J.H.~Gallier, Linear-time algorithms for
testing the satisfiability of propositional Horn formulae, J.\ Logic
Programming 1 (1984) 267--284.
\bibitem{abiteboul} S.~Abiteboul, R.~Hull, V.~Vianu, Foundations of
Databases, Addison-Wesley, 1995.
\bibitem{capdatalog} C.~Spera, Capability safety as Datalog: a foundational
equivalence, arXiv:2603.26725 (2026).
\bibitem{gptcache} F.~Bang, GPTCache: an open-source semantic cache for LLM
applications, in: Proc.\ EMNLP Industry Track, 2023, pp.\ 212--218.
\bibitem{tarski} A.~Tarski, A lattice-theoretical fixpoint theorem and its
applications, Pacific J.\ Math.\ 5 (1955) 285--309.
\end{thebibliography}
\end{document}